\numberwithin{figure}{section}
\newtheorem{thm}{Theorem}[section]
\newtheorem{prop}[thm]{Proposition}
\newtheorem{defn}[thm]{Definition}
\DeclareMathOperator{\polylog}{polylog}
\DeclareMathOperator{\poly}{poly}
\DeclareMathOperator*{\argmin}{arg\,min}
\DeclareMathOperator*{\argmax}{arg\,max}
\DeclareMathOperator{\sgn}{sgn}
\newcommand{\RR}{\mathbb{R}}      
\newcommand{\vnorm}[1]{\left\lVert#1\right\rVert} 
\newcommand{\ifn}{\mathbf{1}} 
\newcommand{\abs}[1]{\left| #1 \right|}
\newcommand{\prp}[2]{Pr_{#2} \left(#1\right)}
\newcommand{\err}[1]{\epsilon\left(#1\right)}
\newcommand{\iwerr}[1]{\hat{\epsilon}\left(#1\right)}
\newcommand{\cH}{\mathcal{H}}
\newcommand{\cX}{\mathcal{X}}
\newcommand{\cD}{\mathcal{D}}
\newcommand{\cV}{\mathcal{V}}
\newcommand{\cY}{\mathcal{Y}}
\newcommand{\cO}[1]{\mathcal{O}\left(#1\right)}
\newcommand{\ctO}[1]{\tilde{\mathcal{O}}\left(#1\right)}
\title{The Utility of Abstaining in Binary Classification}
\author{
Akshay Balsubramani \\
University of California, San Diego\\
\texttt{abalsubr@cs.ucsd.edu}
}
\begin{document}

\maketitle

\begin{abstract}
We explore the problem of binary classification in machine learning, with a twist - the classifier is allowed to abstain on any datum, professing ignorance about the true class label without committing to any prediction. This is directly motivated by applications like medical diagnosis and fraud risk assessment, in which incorrect predictions have potentially calamitous consequences. We focus on a recent spate of theoretically driven work in this area that characterizes how allowing abstentions can lead to fewer errors in very general settings. Two areas are highlighted: the surprising possibility of zero-error learning, and the fundamental tradeoff between predicting sufficiently often and avoiding incorrect predictions. We review efficient algorithms with provable guarantees for each of these areas. We also discuss connections to other scenarios, notably active learning, as they suggest promising directions of further inquiry in this emerging field.
\end{abstract}

\section{Introduction}
Consider a general practice physician treating a patient with unusual or ambiguous symptoms. The general practitioner often does not have the capability or equipment to confidently diagnose such an ailment, but will be able to refer the patient to a specialist or hospital. Therefore, the GP is faced with a difficult choice: either make a potentially erroneous diagnosis and act on it (which can occasionally have catastrophic consequences) or avoid committing to such a diagnosis and refer the patient on instead (which will almost certainly cost extra time and resources).

Such a situation motivates the study of prediction algorithms which are able not only to form a hypothesis about the correct prediction, but also \emph{abstain} entirely from making a prediction if they are not confident enough. In machine learning, these algorithms are often formulated as classifiers, and the field of classification with abstention is therefore of increasing research interest.

In this manuscript, we survey the problem of binary classification where the classifier is allowed to abstain. 


\subsection{Outline}

We will concern ourselves with the provable potential benefits of abstaining in two distinct scenarios. The first, covered in Section~\ref{kwiksec}, is an online learning model free of distributional assumptions on the data, similar to the classical Mistake Bound model~\cite{L88}. The other, discussed in Section~\ref{SLsec}, is a standard statistical learning model in which the algorithm is initially given a chance to train on a set of labeled data, and the data are assumed to be drawn IID (independently and identically distributed) from a fixed distribution.

The abstain option in general, and each of the aforementioned models in particular, is intimately related to the well-studied paradigm of active learning, which aims to minimize the amount of labeled data required by a learner.

One idea that will recur in the survey, and is common to both settings we study, is the fundamental tradeoff between abstaining at a low rate and achieving low error when predictions are made. The intuitive reason for this is simple. A classifier will abstain exactly when it is unsure about which label to predict, and such unsureness implies that it would err on such examples relatively often if forced to predict on them. Traditionally, the classifier is indeed forced to always predict; the possibility of doing better than this case is another spur to explore the utility of abstention.

Note that we avoid discussing Bayesian or explicitly distribution-dependent methods here~\cite{C70}, as they are outside the scope of the manuscript; we believe the work surveyed here is itself extremely general and provides a unique web of insights into the problem.

Due to the volume of the material covered here and the necessary concision of this manuscript, full proofs of results will normally not be provided; the interested reader is referred to the appropriate references. However, we will endeavor to make the treatment here as standalone as possible, and provide proof sketches and intuitions whenever they are instructive.

\subsection{Preliminaries}
\label{sec:Notation}

We consider a standard binary classification setting in which there exist unlabeled data points $x \in \cX$, each of which is associated to a label in $\cY \triangleq \{-1,+1\}$; the pair $(x, y) \in \cX \times \cY$ is referred to as a labeled data point. When the labels are deterministic, we can think of them as being generated by a map $h^* : \cX \mapsto \cY$, so that for all labeled data points $(x_i, y_i)$ we have $y_i = h^* (x_i)$.

The goal is to determine a classifier (a hypothesis) $h : \cX \mapsto \{-1, +1, \perp\}$, where the classifier can either predict one of the two admissible labels or output $\perp$, which is referred to as \emph{abstaining}. The latter can be interpreted as "don't know," a statement of ignorance about which label is correct without committing at all to either. In this case, the risk of predicting the label of the given datum incorrectly is deemed to be excessive. The understanding is that a lower penalty is paid with the $\perp$ prediction than an incorrect $\pm 1$ prediction, which we call a \emph{mistake}.

The learning algorithms we consider are given a hypothesis class $\cH$, which is a set of hypotheses $h : \cX \mapsto \cY$ that is used to constrain the set of possible solutions and for computational reasons. We will consider algorithms which choose a single $h \in \cH$, as well as algorithms which choose a weighted majority vote of all $h \in \cH$. We also follow standard terminology in calling problems with $h^* \in \cH$ \emph{realizable} or \emph{separable}. The non-realizable scenario is often referred to as the \emph{agnostic} case.

We defer more specific notation to the body of the manuscript, where it is presented as needed.


\section{Abstaining in the KWIK Model}
\label{kwiksec}

In this section, we describe a general model which has been influential in recent advances in the abstention literature. It will serve to provide useful insights into the possibilities afforded by abstention, even in a very general setting.

\subsection{KWIK: Formulation}

The KWIK ("Knows What It Knows") model~\cite{LLWS11} is a distribution-free way of formulating the online solvability of problems with \emph{zero} incorrect predictions in total. Algorithms which accomplish this must be self-aware enough to recognize \textit{all} points where they might make an incorrect prediction and output $\perp$ on them, and yet still predict (correctly) on a nontrivial number of examples. The name of the model refers to this self-awareness that it requires from algorithms, which distinguishes it from the otherwise similar Mistake Bound (MB) model.

We first review the salient features of the popular MB model as a useful reference point for further investigation.


\subsubsection{The Mistake Bound Model}
\label{mbmodel}
The Mistake Bound (MB) model~\cite{L88} for binary classification is a general framework for the study of online binary classification algorithms. In it, the learning algorithm receives unlabeled data points chosen one at a time in some arbitrary way (by an adversary in the worst case). At the beginning of round $t$, the algorithm has the hypothesis $h_t$, and sees an unlabeled point $x_t$. It then predicts a label $h_t (x_t) = \hat{y}_t \in \cY$, after which the true label $y_t$ is revealed. The algorithm then makes any necessary internal modifications to update its hypothesis $h_t$ to $h_{t+1}$. The next point $x_{t+1}$ is then chosen, and the process repeats.

In this model, results of interest bound the total number of mistakes $\abs{\{t : \hat{y}_t \neq y_t \}}$ made by the learning algorithm during the process, and are called \emph{mistake bounds}. Typically, the true labels are assumed to be generated by some hypothesis in $h^* \in \cH$, so that $y_t = h^* (x_t)$. In this setting, learnability is defined as follows. Here $dim(\cH)$ represents some predefined notion of the complexity of $\cH$ (for finite $\cH$, take $dim(\cH) = \abs{\cH}$).

\begin{defn}
\label{mbdefn}
An algorithm learns $\cH$ in the MB model if for any $\delta \in (0,1)$, the algorithm makes a total of at most $\poly (\frac{1}{\delta}, dim(\cH))$ mistakes with probability at least $1 - \delta$ over entire runs of the algorithm.
\end{defn}

Well-known examples of mistake bounds include those for the perceptron~\cite{KWA97} and weighted majority~\cite{LW89} learning algorithms.

Such results are considered quite strong, since they imply that the algorithm eventually is mistake-free and because each unlabeled datum $x_t$ may even be chosen by an adversary with full knowledge of the past $\{x_i, \hat{y}_i, y_i\}_{i=1}^{t-1}$. Guarantees in the standard setting of statistical learning, which we will consider more thoroughly in Section~\ref{SLsec}, can be seen to be less general primarily because they assume the unlabeled data $x_t$ to be drawn IID from a distribution that is fixed a priori.

The KWIK model, however, is a strict restriction of this MB model, as we will see next.


\subsubsection{The KWIK Model}


The KWIK~\cite{LLWS11} model is used to analyze online supervised learning algorithms. The setting is very similar to the MB setting, with the algorithm seeing an adversarially chosen point $x_t$ on the $t^{th}$ round. However, the algorithm predicts $\hat{y}_t \in \cY \cup \{\perp\}$, and only observes the true label when it predicts $\perp$.

As in the previous discussion, we assume realizability: $h^* \in \cH$. Then we can define our goal in this model, KWIK-learnability:
\begin{defn}
\label{kwikdefn}
An algorithm learns $\cH$ in the KWIK model if for any $\epsilon, \delta \in (0,1)$, the following are true with probability at least $1 - \delta$ over entire runs of the algorithm:
\begin{enumerate}
\item If $\hat{y}_t \neq \perp$, then $\abs{\hat{y}_t - y_t} < \epsilon$.
\item $\abs{\{t : \hat{y}_t = \perp\}}$, the total number of times the algorithm abstains, is at most $\poly (\frac{1}{\epsilon}, \frac{1}{\delta}, dim(\cH))$.
\end{enumerate}
\end{defn}

It is clear that this shares many similarities with Definition~\ref{mbdefn}. Some notes on Defn.~\ref{kwikdefn} are in order:
\begin{itemize}
\item The first requirement corresponds to the algorithm making no mistakes when it does choose to predict, which is the hallmark of this framework. (Note: As pointed out by Li et al.~\cite{LLWS11}, this does not make sense in the non-realizable case $h^* \notin \cH$. 
Therefore, the definition can be generalized to the non-realizable case by relaxing the first requirement, though we only focus on the realizable case which most known results address.)
\item The definition applies equally to regression problems (though we have presented $y_t$ as being a discrete classification label). Another generalization is to the case when instead of observing the true label $y_t$, the algorithm observes a possibly randomized quantity $z_t$ that depends on $y_t$. For instance, when learning the bias of a biased coin, $z_t$ is a Bernoulli random variable with bias $y_t$. We neglect to formalize this for simplicity, but the full statement is in~\cite{LLWS11}.
\end{itemize}

KWIK learnability is a fairly general condition for the same reasons discussed for MB learnability in Section~\ref{mbmodel}. In fact, it is stronger:
\begin{prop}
\label{KWIKtoMB}
Any KWIK algorithm which outputs $\perp$ at most $B$ times can be converted into an MB algorithm for the same problem which makes at most $B$ mistakes.
\end{prop}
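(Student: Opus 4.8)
The plan is to exhibit a direct black-box reduction: given a KWIK algorithm $\cA$ that abstains at most $B$ times, I would build an MB algorithm $\cA'$ that runs $\cA$ internally, uses $\cA$'s predictions verbatim, and replaces each abstention with an arbitrary guess. The key structural observation is that the MB model grants the learner \emph{more} feedback than KWIK requires: the true label is revealed on every round, not only on abstentions. So $\cA'$ will always be able to supply $\cA$ with exactly the information $\cA$ expects to see in a genuine KWIK run.

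Concretely, on round $t$ the MB algorithm $\cA'$ receives $x_t$ and forwards it to $\cA$, obtaining $\cA$'s output $\hat{y}_t \in \cY \cup \{\perp\}$. If $\hat{y}_t \neq \perp$, then $\cA'$ predicts $\hat{y}_t$ and, because $\cA$ receives no feedback on rounds where it predicts, simply discards the revealed label $y_t$ without updating $\cA$. If $\hat{y}_t = \perp$, then $\cA'$ is forced to commit to a label, so it predicts an arbitrary fixed value (say $+1$); it then reads the revealed $y_t$ and feeds it to $\cA$ as the observation $\cA$ would have received upon abstaining, letting $\cA$ update its internal state exactly as in a real KWIK run. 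In this way the sequence of points and observations presented to $\cA$ is indistinguishable from a legitimate KWIK execution, so $\cA$'s guarantees continue to hold for the simulated run.

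The mistake count then follows from a case split. On every round where $\cA$ predicts a label, the KWIK accuracy guarantee (Definition~\ref{kwikdefn}, part 1) ensures $\abs{\hat{y}_t - y_t} < \epsilon < 1$, which for $y_t \in \{-1,+1\}$ forces $\hat{y}_t = y_t$, so $\cA'$ makes no mistake. Hence every mistake of $\cA'$ must occur on a round where $\cA$ abstained, and there are at most $B$ such rounds. The same high-probability event (of probability at least $1-\delta$) under which $\cA$'s two KWIK guarantees hold is therefore the event under which $\cA'$ makes at most $B$ mistakes, so the reduction preserves the probabilistic nature of the guarantee.

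The main thing to get right — rather than a genuine obstacle — is the bookkeeping of feedback: I must feed the revealed label to $\cA$ \emph{precisely} on abstention rounds and withhold it on prediction rounds, so that $\cA$'s input stream matches the KWIK interface exactly and its invariants (realizability-based correctness and the abstention budget) remain intact. One should also confirm that the reduction is agnostic to how the adversary selects $x_t$, which it is, since $\cA'$ never alters the point stream and only intervenes at the level of outputs and feedback routing.
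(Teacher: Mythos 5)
Your reduction is correct and is essentially the paper's own proof, which states in one line that the MB algorithm runs the KWIK algorithm and substitutes a prediction whenever it would output $\perp$. Your additional bookkeeping about withholding the revealed label on prediction rounds so the simulated KWIK interface is faithful is a sound elaboration of the same idea, not a different approach.
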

\begin{proof}
The MB algorithm is the same as the KWIK algorithm, except that it outputs a prediction every time the KWIK algorithm would output $\perp$.
\end{proof}

As discussed earlier, this is due to the self-awareness required of the algorithm. An MB algorithm, by contrast, need only ensure it is learning correctly, as it blithely predicts without needing to consider its confidence in its own predictions.

For this reason, there are problems which are MB-learnable, but for which no KWIK algorithm exists; in a sense, the converse of Prop.~\ref{KWIKtoMB} does not hold. One such problem is learning a Boolean singleton: the hypothesis class is $h_i : \{0,1\}^n \mapsto \cY , \;i = 1,\dots,2^n$, predicting $+1$ iff the input is a binary representation of $i-1$. This can be MB-learned with at most 1 mistake by simply predicting $-1$ always until the mistake is made. However, KWIK-learning it is more difficult - the restriction that the algorithm must be \emph{absolutely sure} to predict is excessively strong, because in the worst case it is not sure until seeing $2^n - 1$ examples (effectively brute-force searching the space for the $+1$ data point). For the formal details, the reader is referred to Example 1 of~\cite{SZB10}.


The generality of the KWIK framework allows it to be used even for algorithms with adaptive sampling strategies, in which there are complex dependencies between the data points sampled. The framework was originally motivated by applications in reinforcement learning, where such situations arise regularly. It has proved a useful tool in proving optimality guarantees for such algorithms; a further discussion is beyond the scope of this manuscript, but the interested reader is referred to~\cite{LLWS11, DLL09, WSDL09, SS11}, and to~\cite{AADK13} for a recent application to the setting of multi-armed bandits.


\subsection{KWIK: Algorithmic Building Blocks}
As we have seen, the KWIK definition is extremely general. Remarkably, however, there are efficient KWIK algorithms for several nontrivial problems. These are for the most part derived from a few generic algorithms, which provide glimpses into the type of reasoning that is fruitful in this distribution-free setting. We therefore present some of these representative "building block" algorithms, along with their associated proof ideas.

\subsubsection{Enumeration}
\label{enumsec}

The most ubiquitous and central such method, the \emph{enumeration} algorithm, works in the realizable case when $\cH$ is finite, and is laid out as Algorithm~\ref{enumalg}.
\begin{algorithm}
\caption{Enumeration}
\label{enumalg}
\begin{algorithmic}[1]
\State $\cV \gets \cH$
\For{i = 1, 2, \dots}
	\If{$h(x_i)$ is the same $\forall h \in \cV$}
		\State Choose any $h \in \cV$ and predict $h(x_i)$
	\Else
		\State Output $\perp$ and receive label $y_i$
		\State $\cV \gets \{h \in \cV : h(x_i) = y_i \}$
	\EndIf
	\If{$\abs{\cV} = 1$}
		\State Terminate with final output $\cV$
	\EndIf
\EndFor
\end{algorithmic}
\end{algorithm}

The algorithm maintains a \emph{version space} $\cV$ of the hypotheses consistent with all data seen so far, paring down $\cV$ appropriately as more data are revealed. This is a very natural strategy for making conservative predictions. The optimal hypothesis $h^*$ is always in $\cV$ by construction; consequently, any predicted label $\hat{y}_i$ always agrees with $h^* (x_i)$, which is the true label $y_i$ by the realizability assumption.

Every time the algorithm abstains, at least one hypothesis in $\cV$ is predicting the wrong label, and such hypotheses are removed each iteration. Therefore, each abstention leads to the removal of at least one hypothesis from $\cV$, leading to a KWIK bound of $\abs{\cH} - 1$ abstentions. 

There is a natural correspondence between predicting $\perp$ here and simply requesting a label - abstention may be thought of as a couched label request. Given this and the simplicity of the enumeration procedure, it is not surprising to find that Algorithm~\ref{enumalg} appears independently in \emph{active learning}, a supervised learning paradigm where labels are only provided upon the learner's request~\cite{S12}. These and other active learning connections are discussed in Sec.~\ref{cssactive}.

Although the enumeration algorithm is appealingly general and simple, the major drawback is the same one confronted by the active learning literature: explicitly maintaining the version space by storing all its constituent hypotheses can be computationally intractable, requiring an exponential amount of memory!

Fortunately, as we will observe later in the manuscript, there are nontrivial special cases for which $\cV$ can be implicitly maintained so that the operations of Algorithm~\ref{enumalg} can be performed without actually storing $\cV$, even when $\cH$ is infinite.


\subsubsection{Other Basic KWIK Algorithms}

In order to highlight the utility of KWIK, we will briefly touch upon a few other basic algorithms in this framework. All of them are efficient - they run in polynomial time.

\begin{itemize}
\item \textit{Learning the bias of a coin}: The algorithm simply abstains on the first $T$ examples to get IID Bernoulli-distributed labels in order to estimate the bias, and thereafter predicts on every example according to the empirical estimate of the bias. Applying the Hoeffding bound yields a KWIK bound of $T \in \cO{\epsilon^{-2} \ln (1/\delta)}$.
\item \textit{Learning the distribution of an $n$-sided die}: This is the multinomial-distribution analogue of the above coin-learning problem. It can be solved by reduction to $n$ "one-vs.-rest" coin-learning problems, and the algorithm is the same as for the coin-learning case. A Chernoff bound analysis yields the KWIK bound $\cO{n \epsilon^{-2} \ln (n/\delta)}$.
\item \textit{Linear regression with no noise in $\RR^n$}: This is KWIK-learnable by a beautifully simple algorithm that exploits linear algebraic structure. The algorithm maintains a running training set of points $x$ and their corresponding linear function values $f(x) = \theta \cdot x,\;\theta \in \RR^n$ (unknown $\theta$). For any new point $x_t$, the algorithm first checks if $x_t$ is in the linear span of the points in the training set so far, which can be done efficiently by solving a system of linear equations. If so, $f(x_t)$ is an efficiently determinable linear combination of the function values in the training set, and the algorithm can predict with certainty. Otherwise, it abstains and adds $(x_t, f(x_t))$ to the training set. The training set cannot grow beyond size $n$, so the KWIK bound is $n$.
\item \textit{Linear regression with additive white noise in $\RR^n$}: Though neither the proof nor the algorithm are as simple as for the zero-noise case, the algorithm predicts using the least-squares solution, or abstains by assessing a confidence measure based on the eigenvalue spectrum of the data. The KWIK bound is $\ctO{n^3 \epsilon^{-4}}$. (Note: This setting has been studied independently for a much weaker \emph{oblivious}, rather than adaptive, adversary; under this assumption, a much better bound of $\ctO{n \epsilon^{-2}}$ is proved in~\cite{CGO09}.)
\item Algorithms also exist for certain generic combinations of KWIK-learnable scenarios. For instance, if $\cH_i ,\;i = 1,\dots,n$ are KWIK-learnable, then so is $\cup_i \cH_i$, even when observations are corrupted by white noise. Also KWIK-learnable are hypothesis classes whose domains and ranges are respectively Cartesian products of the domains and ranges of KWIK-learnable classes $\cH_i$.
\end{itemize}

The intent here is to highlight that KWIK is a nontrivial model despite its extreme generality. We will now address a central issue of this survey - the tradeoff between errors and abstentions - by exploring the interesting consequences of allowing the learner to make some mistakes.

\subsection{Allowing a Few Mistakes in KWIK}
\label{kwiksomemistakes}

Though KWIK has proved to be quite useful in the past few years, its restriction that the learner make absolutely no mistakes is somewhat extreme in many circumstances. As we will see, allowing a few mistakes in the framework can drastically improve abstention rates.


\subsubsection{Enumeration With a Few Mistakes}

The cornerstone enumeration method of Algorithm~\ref{enumalg} has recently been extended to allow up to $k$ mistakes for some prespecified parameter $k$, again in an appealingly simple and generic manner~\cite{SZB10}. This newer algorithm, which we dub \emph{relaxed enumeration} in our presentation, is outlined in Algorithm~\ref{relenumalg}, and again is motivated by the realizable case for finite $\cH$.

(Note: Here, the true label of a point is revealed regardless of whether the algorithm abstains or not. This is still consistent with the original KWIK framework of Algorithm~\ref{enumalg} because of that algorithm's zero-mistakes property.)

\begin{algorithm}
\caption{Relaxed Enumeration}
\label{relenumalg}
\begin{algorithmic}[1]
\State $\cV \gets \cH, s \gets \abs{\cH}^{\frac{k}{k+1}}, m \gets 0$
\For{i = 1, 2, \dots}
	\State $\mu \triangleq \displaystyle \min_{\lambda \in \{-1, +1\}} \abs{\{h \in \cV : h(x_i) = \lambda \}}$, so that $\mu$ is the number of hypotheses in $\cV$ predicting the minority label
	\If{$\mu \leq s$}
		\State Predict with the majority, i.e. predict $\hat{y}_i \triangleq \displaystyle \argmax_{\lambda \in \{-1, +1\}} \abs{\{h \in \cV : h(x_i) = \lambda \}} $
	\Else
		\State Output $\perp$
	\EndIf
	\State Receive label $y_i$
	\State $\cV \gets \{h \in \cV : h(x_i) = y_i \}$
	\If{$\hat{y}_i \neq y_i$ and $\hat{y}_i \neq \perp$}
		\State $m \gets m+1$
		\State $s \gets \abs{\cV}^{\frac{k-m}{k+1-m}}$
	\EndIf
	\If{$\abs{\cV} = 1$}
		\State Terminate with final output $\cV$
	\EndIf
\EndFor
\end{algorithmic}
\end{algorithm}

Algorithm~\ref{relenumalg} has an instructive interpretation as a generalization of Algorithm~\ref{enumalg}. Like its predecessor, it tracks a version space $\cV$ and terminates when enough data have arrived to pare this down to a singleton set containing $h^*$. When an unlabeled datum $x_i$ arrives, a vote is taken over the hypotheses in $\cV$. The margin of the vote is used as a measure of confidence in deciding whether or not to predict. The threshold $s$ that is used to make this decision is decreased in such a way that at most $k$ mistakes are made; this is done by keeping track of the number of mistakes $m$ made so far.

Due to the tolerance for a few mistakes, the KWIK bound on the number of $\perp$s is far better than that of Algorithm~\ref{enumalg}. The proof uses an elegant inductive argument which is worth noting in full:
\begin{prop}[~\cite{SZB10}]
\label{relenumbound}
Algorithm~\ref{relenumalg}, in the realizable case, makes at most $k$ mistakes and outputs $\perp$ at most $(k+1)\abs{\cH}^{\frac{1}{k+1}}$ times.
\end{prop}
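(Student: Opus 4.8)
The plan is to prove both bounds simultaneously by induction on the mistake budget $k$, exploiting a self-similarity of Algorithm~\ref{relenumalg}: immediately after its first mistake, the algorithm finds itself in exactly the configuration it started in, but with one fewer mistake allowed and a drastically smaller version space. Throughout I would use two elementary invariants. First, since the true label is always revealed and $\cV$ is always intersected with the hypotheses consistent with it, realizability guarantees $h^* \in \cV$ at every step, so $\abs{\cV} \geq 1$ and every \emph{unanimous} prediction is correct. Second, whenever the algorithm abstains we have $\mu > s$, and the update then deletes from $\cV$ all hypotheses disagreeing with $y_i$; since $h^*$ survives, the number deleted is at least $\min(\mu, \abs{\cV} - \mu) = \mu > s$. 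Thus each abstention strictly shrinks $\cV$ by more than the current threshold $s$.

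For the inductive step, fix budget $k$ and initial size $n = \abs{\cV}$, so the threshold starts at $s_0 = n^{k/(k+1)}$ and stays constant until the first mistake. During this initial phase every abstention deletes more than $s_0$ hypotheses while $h^*$ remains, so if $a_0$ abstentions occur before the first mistake then $a_0 s_0 < n$, giving $a_0 < n^{1/(k+1)}$. If no mistake ever occurs we are done. Otherwise, at the first mistake we predicted the majority yet $h^*$ lay in the minority, so the post-update version space consists of exactly the $\mu \leq s_0 = n^{k/(k+1)}$ minority hypotheses, and the algorithm resets $m \gets 1$ and $s \gets \abs{\cV}^{(k-1)/k}$. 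The crucial observation is that this is \emph{precisely} the starting configuration of Algorithm~\ref{relenumalg} run with budget $k-1$ on a version space of size $n_1 \triangleq \mu \leq n^{k/(k+1)}$: the threshold-update exponent $\frac{k-m}{k+1-m}$ is tailored so that decrementing the remaining budget reindexes the run exactly. Applying the inductive hypothesis to this continuation yields at most $k-1$ further mistakes and at most $k\, n_1^{1/k}$ further abstentions. Since $n_1^{1/k} \leq (n^{k/(k+1)})^{1/k} = n^{1/(k+1)}$, the totals are at most $1 + (k-1) = k$ mistakes and at most $n^{1/(k+1)} + k\, n^{1/(k+1)} = (k+1)\, n^{1/(k+1)}$ abstentions, closing the induction.

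For the base case $k = 0$ the threshold is $s_0 = n^0 = 1$, the budget is already exhausted, and the algorithm is meant to predict only on unanimous rounds, thereby reverting to the plain enumeration of Algorithm~\ref{enumalg}; by the first invariant every such prediction is correct, so it makes zero mistakes and abstains at most $n - 1 \leq n = (0+1)\, n^{1/1}$ times. I expect the main subtlety to lie exactly here, in the terminal regime where $s = 1$: one must confirm that predictions in this phase are genuinely mistake-free, so that $m$ never reaches $k+1$, at which point the exponent $\frac{k-m}{k+1-m}$ would be undefined. This amounts to checking that the threshold and tie-breaking conventions force a correct prediction whenever the algorithm predicts with $s=1$ (equivalently, that it predicts only when $\cV$ is unanimous). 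The remaining work — the phase-$0$ abstention count and the arithmetic of the exponents — is routine; the real content is the recursive identification of the post-mistake state with a smaller instance, together with the careful verification that the $m$-indexed threshold updates make that identification exact.
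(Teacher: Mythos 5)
Your proposal is correct and follows essentially the same route as the paper's own proof: induction on the mistake budget $k$, pivoting on the first mistake, counting the pre-mistake abstentions by noting that each one removes more than $s = \abs{\cH}^{k/(k+1)}$ hypotheses while $h^*$ survives, and then identifying the post-mistake state with a budget-$(k-1)$ instance on a version space of size at most $\abs{\cH}^{k/(k+1)}$. Your treatment is if anything slightly more careful than the paper's, in pinning down the $k=0$ base case (where the algorithm reverts to plain enumeration and is mistake-free) and in verifying that the $m$-indexed threshold exponent makes the recursive identification exact.
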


\begin{proof}
The proof proceeds by induction in $k$. The first mistake made reduces $\abs{\cV}$ so that $\abs{\cV} < \abs{\cH}^{\frac{k}{k+1}}$. Since the effective size of the hypothesis class being considered at any point is $\abs{\cV}$, the inductive hypothesis is that after the first mistake, the algorithm will terminate after at most $k-1$ mistakes and $k \abs{\cV}^{\frac{1}{k}} < k \left( \abs{\cH}^{\frac{k}{k+1}} \right)^{\frac{1}{k}} = k \abs{\cH}^{\frac{1}{k+1}}$ abstentions.

The base case of the induction concerns the number of abstentions before the first mistake. In this regime, $s = \abs{\cH}^{\frac{k}{k+1}}$, and $\mu > s$, so there are at least $\abs{\cH}^{\frac{k}{k+1}}$ hypotheses removed from $\cV$ every iteration for predicting the wrong label. Since $\abs{\cV} = \abs{\cH}$ at first, the algorithm abstains at most $\frac{\abs{\cH}}{\abs{\cH}^{\frac{k}{k+1}}} = \abs{\cH}^{\frac{1}{k+1}}$ times before the first mistake.

Combining this with the inductive hypothesis, the total number of $\perp$s is $\leq k \abs{\cH}^{\frac{1}{k+1}} + \abs{\cH}^{\frac{1}{k+1}} = (k+1)\abs{\cH}^{\frac{1}{k+1}}$, which completes the induction.
\end{proof}

In the original paper~\cite{SZB10}, the analysis follows that of the "Egg Dropping Game," a common brainteaser, which can be found in~\cite{GF08}. It appears to this author that the analysis of~\cite{SZB10} can in fact be significantly tightened and enhanced using further results in~\cite{GF08}, which utilize a more complex inductive argument. As this is unpublished, however, we limit ourselves to pointing it out as an open problem.

In any case, it is apparent that even for small $k$, allowing a few mistakes leads to vast improvements over Algorithm~\ref{enumalg} in the dependence on $\abs{\cH}$ - from $\cO{\abs{\cH}}$ to $\cO{\abs{\cH}^{\frac{1}{k+1}}}$. For instance, allowing just $k=1$ mistake yields a bound of $2\sqrt{\abs{\cH}}$. 

This again highlights one of the most fundamental ideas in this survey - the tradeoff between predicting sufficiently often and avoiding prediction errors. Though zero-error learning is significant in its own right, it is a rather extreme condition that implies a similarly extreme cost of a mistake relative to the cost of abstaining. Most applications would be content to allow a few mistakes in return for a significantly more useful classifier (one that abstains less). This promise is exactly what Prop.~\ref{relenumbound} quantifies.


\subsubsection{Tractable Relaxed Enumeration for Linear Separators}

Algorithm~\ref{relenumalg} in general suffers from the same computational intractability as Algorithm~\ref{enumalg}, because it too appears to require the version space to be explicitly stored. However, this can be circumvented in useful special cases, including when learning monotone disjunctions and linear separators~\cite{SZB10}. We discuss the intuition for the latter, as it is instructive; for formal proofs, refer to~\cite{SZB10}.

Concretely, let $\cX = \RR^d$ be the data space with the Euclidean norm, and let $\cH$ be the set of linear separators $\{w \in \RR^d : \vnorm{w} = 1\}$, so that the prediction rule is $h_w (x) = \sgn(w \cdot x)$. Denote the optimum separator by $w^*$. Assume the data are separable with margin $\gamma \triangleq \min_x \frac{w^* \cdot x}{\vnorm{x}} > 0$ - this loosely corresponds to the realizability assumption of relaxed enumeration. This is the setup used for the well-known standard perceptron algorithm~\cite{KWA97}. If points $x_1, \dots, x_n$ have already been seen, it will also be useful to think of the problem as a linear program with $d$ variables (one for each coordinate of $w$) and $n$ halfspace constraints $w \cdot x_i > 0$ for $\{i : y_i = +1\}$ and $w \cdot x_i < 0$ for $\{i : y_i = -1\}$.

~\cite{SZB10} show how to efficiently implement Algorithm~\ref{relenumalg} here using the following insight: the algorithm does not need to store $\cV$, because its decision-making process only requires it to deal with $\abs{\cV}$ and estimate $\abs{\{h \in \cV : h(x_i) = \pm 1 \}}$.

The first quantity, $\abs{\cV}$, is the volume of the feasible set of the linear program (LP) mentioned above, and it contracts significantly whenever a mistake is made. The $\gamma$-separability assumption implies that this feasible set cannot shrink below a certain volume, as there is some room around $w^*$ where consistent separators can fall. Combining these two facts controls the number of mistakes made, and finishes the use of $\abs{\cV}$.

However, the algorithm still needs to estimate the quantities $\abs{\{h \in \cV : h(x_i) = \pm 1 \}}$ for arbitrary $x_i$ to decide whether or not to abstain. This is essentially the problem of estimating the relative volumes of two convex sets. Using Hit-and-Run~\cite{LV06}, a tool for uniformly sampling from convex bodies, we can simply sample uniformly from the feasible set of the current LP, and see whether the sampled point is on the $+1$ or $-1$ side. Repeatedly sampling points essentially extends this idea to a Monte Carlo estimate of the relative volumes of $\abs{\{h \in \cV : h(x_i) = +1 \}}$ and $\abs{\{h \in \cV : h(x_i) = -1 \}}$, which the algorithm can use to make its decision.

It turns out that the number of points required can be managed to yield a polynomial-time algorithm which is equivalent to Algorithm~\ref{relenumalg} with high probability. Formalizing all this leads to the main result, which nicely complements the well-known perceptron mistake bound of $\cO{\gamma^{-2}}$~\cite{S-S12}:
\begin{prop}[~\cite{SZB10}]
Assume data are linearly $\gamma$-separable in $\RR^d$, and define $R = \left( 1 + \frac{\sqrt{d}}{\gamma} \right)^d $. Then for any $k > 0$, a linear separator can be learned with at most $k$ mistakes and $\cO{R^{1/k} \log R}$ abstentions.
\end{prop}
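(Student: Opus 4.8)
The plan is to recognize this as the continuous incarnation of the relaxed-enumeration bound of Prop.~\ref{relenumbound}, with the cardinality $\abs{\cH}$ replaced throughout by a \emph{volume}. The version space $\cV$ is the feasible set of the linear program (the set of consistent unit separators), a convex body in $\RR^d$; its cardinality is meaningless, but its volume plays exactly the role that $\abs{\cV}$ played in the discrete argument. The quantity $R$ is precisely the ratio of the initial feasible volume $V_0$ to the smallest volume $V_{\min}$ that the version space can attain while remaining consistent. Once this correspondence is fixed, I would run the same inductive scheme as in Prop.~\ref{relenumbound}: maintain a volume threshold $s$, predict with the majority (by volume) when the minority volume falls below $s$, and lower $s$ after each mistake so that the total mistake count stays at $k$.

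First I would pin down the two volume estimates that produce $R = (1 + \sqrt{d}/\gamma)^d$. For the lower bound, $\gamma$-separability guarantees a whole ball of consistent separators around $w^*$: if $\vnorm{w - w^*}$ is smaller than the margin, then $w \cdot x$ keeps the same sign as $w^* \cdot x$ on every seen point, so $w$ remains feasible. Accounting for the LP normalization (where the relevant inscribed radius scales like $\gamma/\sqrt{d}$), this ball has volume at least $c_d (\gamma/\sqrt{d})^d$ for the unit-ball constant $c_d$, giving the floor $V_{\min}$. The initial feasible volume is at most that of a ball of radius $1 + \gamma/\sqrt{d}$, and the ratio of the two is exactly $(1 + \sqrt{d}/\gamma)^d = R$. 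The mistake bound of $k$ is then inherited directly from the threshold-lowering construction of Algorithm~\ref{relenumalg}, with $\gamma$-separability ensuring that $w^*$ always remains feasible, so that $\cV$ never empties and its final state is the region pinned around $w^*$. The floor $V_{\min}$ instead enters the abstention analysis, where it guarantees that the size ratio $R$ is finite and plays the role that the terminal state $\abs{\cV}=1$ played in the discrete case.

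The abstention count then follows the identical two-stage induction: each abstention removes at least a volume $s$ from $\cV$ (the revealed label identifies which side is inconsistent, and the minority side has volume exceeding $s$), so the number of abstentions before the first mistake is at most $V_0/s$, and the inductive hypothesis on the reduced body handles the remaining mistakes. Substituting $\abs{\cH} \to R$ into the bound of Prop.~\ref{relenumbound} yields an $R^{1/(k+1)}$-type dependence; the slightly looser exponent $R^{1/k}$ (in place of $R^{1/(k+1)}$) and the extra $\log R$ factor in the stated bound arise from the one piece that has no discrete analog — the volume comparisons cannot be computed exactly and must instead be \emph{estimated} by Hit-and-Run sampling from $\cV$, counting the fraction of sampled points on each side of the hyperplane $w \cdot x_i = 0$.

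The hard part will be exactly this sampling step: proving that the Monte Carlo estimates of the two side-volumes are accurate enough that the approximate majority-vote decision never deviates from the exact Algorithm~\ref{relenumalg} decision, with failure probability controlled across the entire run. One must show that polynomially many Hit-and-Run samples suffice to resolve whether $\mu \leq s$ whenever the true minority volume is bounded away from $s$, and then union-bound the sampling failure probability over all rounds; it is this per-round sample budget, growing logarithmically in the inverse target volume (hence in $R$), that contributes the $\log R$ overhead and degrades the exponent. Handling the borderline case where the minority volume is genuinely close to $s$ — where no finite number of samples resolves the comparison — requires arguing that such ties may be broken either way without spoiling either the mistake bound or the abstention bound, and this is where the bulk of the technical care in~\cite{SZB10} resides.
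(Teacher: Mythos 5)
Your proposal follows essentially the same route as the paper's own sketch: the paper likewise replaces cardinality by the volume of the LP feasible set, uses $\gamma$-separability to lower-bound that volume by a ball of consistent separators around $w^*$ (yielding the ratio $R$), ports the inductive threshold-lowering argument of Prop.~\ref{relenumbound}, and invokes Hit-and-Run to estimate the relative side-volumes for the predict-or-abstain decision. The only caveat is that the paper defers all formal details to~\cite{SZB10} and does not confirm your specific attribution of the weakened exponent $R^{1/k}$ and the extra $\log R$ factor to Monte Carlo sampling error, so that part of your account is a plausible but unverified reconstruction.
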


Omitting logarithmic factors, this bound on $\perp$ is $\approx \cO{\left( \frac{\sqrt{d}}{\gamma} \right)^{d/k}}$ for small $\gamma$. This implies the appealing fact that the mistake tolerance $k$ needs only to scale linearly in the dimensionality $d$ to achieve a fixed-exponent polynomial dependence on $\gamma$.

Note that although Algorithms~\ref{enumalg} and~\ref{relenumalg} were formulated for finite $\cH$ with the counting measure for simplicity, we have just sketched a tractable implementation for an infinite $\cH$ with the Lebesgue measure. This type of efficient Monte Carlo sampling can widely broaden the potential usability of the algorithms; for instance, linear separators constitute a very useful $\cH$ in applications.


\section{Abstaining in Statistical Learning}
\label{SLsec}

We now change course somewhat to examine the effect of the abstain option in a standard \emph{statistical learning} setting, where unlabeled data points $x_i$ are drawn IID from a distribution that is fixed beforehand. This can be thought of as a variation of the PAC (Probably Approximately Correct) model which is standard in learning theory.

As mentioned earlier, this setting is more restrictive than the KWIK model. It imposes an extra IID assumption on the data generation process, and also treats the batch case in which we begin tracking the algorithm's performance after it has already seen $m$ labels, rather than penalizing the algorithm for performance at the start of the learning process as well. As might be expected, these more restrictive assumptions lead to a wider range of stronger results than are known for the KWIK case. These results will now be introduced.

Formally, we now consider a standard batch learning setting where we have a training set $S = \{(x_1, y_1), \dots, (x_m, y_m)\}$ of points drawn i.i.d. from a distribution $\cD$. The learning algorithm trains on these points and thereafter sees no more labels. To emphasize the point, the learner will never even see the labels of points on which it predicts $\perp$ (recall that this was the case in KWIK).

Define the error of a hypothesis $\err{h} = Pr_\cD (h(x) \neq y)$; this is analogous to the number of mistakes in the KWIK framework. Similarly, rather than speaking of the total number of abstentions, the appropriate quantity to discuss will be the probability of abstentions over $\cD$. We also will speak of the training error (error on $S$) $\iwerr{h} = \frac{1}{m} \sum_{i=1}^m \ifn(h(x_i) \neq y_i)$.

We will otherwise follow the pattern laid out in our KWIK discussion of Section~\ref{kwiksec}, presenting results first for the zero-error case and then addressing the error-abstention tradeoff.

\subsection{An Algorithm for Zero-Error Learning}
As we did in the KWIK overview of Section~\ref{kwiksec}, we begin with the case when the algorithm is not allowed to make any erroneous predictions at all, and attempt to minimize the abstain probability. It is a nontrivial (and recent) result that this is even possible in a general setting. However, we will see how it can be done, and use the resulting algorithm as a point of departure to explore some powerful recent results in this vein.

Recall that this zero-error scenario's KWIK analogue (no mistakes) was addressed by Algorithm~\ref{enumalg}, the enumeration algorithm. It turns out that an almost identical algorithm can be used for the realizable case in the statistical learning setting as well. This was introduced in recent work as the Consistent Selective Strategy (CSS)~\cite{EW10}, described in Algorithm~\ref{cssalg}.

\begin{algorithm}
\caption{Consistent Selective Strategy (CSS)}
\label{cssalg}
\begin{algorithmic}[1]
\State Given: training set $S = \{(x_1, y_1), \dots, (x_m, y_m)\}$
\State $\cV \gets \{h \in \cH : h(x_j) = y_j \;\;\forall j = 1, \dots, m \}$
\For {any unlabeled point $x$}
\If{$h(x)$ is the same $\forall h \in \cV$}
	\State Choose any $h \in \cV$ and predict $\hat{y} = h(x)$
\Else
	\State Output $\perp$
\EndIf
\EndFor
\end{algorithmic}
\end{algorithm}

CSS is essentially a batch version of Algorithm~\ref{enumalg}, the enumeration KWIK algorithm for the realizable case. It has zero error with certainty for the same reason as mentioned for Algorithm~\ref{enumalg} - by construction, $h^* \in \cV$, and so any prediction must agree with $h^*$ and therefore be correct. 

As with Algorithm~\ref{enumalg}, though, the significance of CSS lies in the fact that it does not abstain too often:
\begin{prop}[Thm. 8,~\cite{EW10}]
For Algorithm~\ref{cssalg},
$$ \prp{\perp}{\cD} \leq \frac{1}{m} \left( (\ln 2) \min(\abs{\cH}, \abs{\cX}) + \ln \frac{1}{\delta} \right)$$
\end{prop}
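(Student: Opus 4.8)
The plan is to characterize exactly when CSS abstains, and then to control the probability mass of the abstention region by a union bound over the finitely many regions this set could possibly be. Since $h^* \in \cV$ by realizability, all hypotheses in $\cV$ agree at a point $x$ precisely when they all agree with $h^*$ there; hence CSS abstains on $x$ if and only if $x$ lies in the region of disagreement $\mathrm{DIS}(\cV) \triangleq \{x \in \cX : \exists\, h \in \cV,\; h(x) \neq h^*(x)\}$. The quantity to bound is thus $\prp{\perp}{\cD} = Pr_\cD(\mathrm{DIS}(\cV))$, a random variable depending on the training set $S$, and I would aim to show that it exceeds $\epsilon \triangleq \frac{1}{m}\left( (\ln 2)\min(\abs{\cH}, \abs{\cX}) + \ln\frac{1}{\delta}\right)$ with probability at most $\delta$ over the draw of $S$.

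The first key step is the observation that every training point must lie outside $\mathrm{DIS}(\cV)$. Indeed, each $h \in \cV$ is consistent with the entire training set, so at any $x_j$ all members of $\cV$ agree with $h^*(x_j) = y_j$; thus $x_j \notin \mathrm{DIS}(\cV)$ for every $j$. Consequently, for any fixed region $A$, the event $\{\mathrm{DIS}(\cV) = A\}$ forces all $m$ training points to have avoided $A$, an event of probability at most $(1 - Pr_\cD(A))^m$.

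The second key step is a union bound over the possible values of $\mathrm{DIS}(\cV)$. Because $\mathrm{DIS}(\cV)$ is a deterministic function of $S$, the events $\{\mathrm{DIS}(\cV) = A\}$ partition the sample space as $A$ ranges over achievable regions, so
\[
\prp{\, Pr_\cD(\mathrm{DIS}(\cV)) > \epsilon}{S} \;=\; \sum_{A:\, Pr_\cD(A) > \epsilon} \prp{\mathrm{DIS}(\cV) = A}{S} \;\leq\; \sum_{A:\, Pr_\cD(A) > \epsilon} (1-\epsilon)^m .
\]
The number of achievable disagreement regions is at most the number of distinct version spaces, hence at most $2^{\abs{\cH}}$ (one per subset of $\cH$); it is also at most the number of subsets of $\cX$, namely $2^{\abs{\cX}}$; so it is at most $2^{\min(\abs{\cH}, \abs{\cX})}$. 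Plugging this count in and using $(1-\epsilon)^m \leq e^{-\epsilon m}$ gives $\prp{\, Pr_\cD(\perp) > \epsilon}{S} \leq 2^{\min(\abs{\cH}, \abs{\cX})} e^{-\epsilon m}$; substituting the chosen $\epsilon$ makes the right-hand side exactly $\delta$, which is the claim.

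The main obstacle, and the step I would verify most carefully, is the counting of achievable disagreement regions together with the legitimacy of the union bound: one must confirm that it suffices to range over a finite collection of candidate regions (rather than all measurable sets) and that conditioning on the realized region $A$ genuinely forces the clean ``all points avoid $A$'' event producing the $(1-\epsilon)^m$ factor. The two ways of counting — via subsets of $\cH$ and via subsets of $\cX$ — are exactly what yield the $\min(\abs{\cH}, \abs{\cX})$ term, while the factor $\ln 2$ emerges from $\ln 2^{\min(\abs{\cH},\abs{\cX})}$ when solving $2^{\min(\abs{\cH},\abs{\cX})} e^{-\epsilon m} = \delta$ for $\epsilon$.
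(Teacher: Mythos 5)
Your proposal is correct and follows essentially the same route as the paper's proof: a union bound over the (at most $2^{\min(\abs{\cH},\abs{\cX})}$) candidate version spaces/disagreement regions, combined with the observation that every training point must land in the agreement region, yielding the $(1-\epsilon)^m \leq e^{-\epsilon m}$ factor. Your phrasing in terms of $\mathrm{DIS}(\cV)$ is in fact a cleaner rendering of the same argument the paper sketches via the subsets $G_i$.
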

\begin{proof}[Proof Sketch]
Two hypotheses are equivalent if they have the same predictions on all points in $\cX$, so there are effectively $K = 2^{\min(\abs{\cH}, \abs{\cX})}$ possible subsets of $\cH$. Of these, the proof considers only the $k \leq K$ which are \emph{not} unanimous on all the data (only on at most $1 - \epsilon$ of it, for some $\epsilon > 0$). For any such subset (call it $G_i , \;i \in \{1,\dots,k\}$), it is very likely that not all $m$ points in the training set are drawn from $G_i$. This implies that $\cV$ is different from $G_i$.

In this way, it can be argued that $\cV$ is different from all $G_i$. Since the $G_i$ are characterized only by being unanimous on $\leq 1 - \epsilon$ of the data, $\cV$ must be unanimous on $\geq 1 - \epsilon$ of the data. Since we predict exactly when $\cV$ is unanimous, we must predict quite often, so we cannot abstain very often.
\end{proof}

The $\Theta(1/m)$ dependence of the abstain probability establishes CSS as a nontrivial algorithm. In fact, there is a significant sense in which it is optimal!
\begin{prop}[Thm. 7,~\cite{EW10}]
\label{cssoptimality}
In the realizable case, any classifier which has zero error with certainty on any $\cD$ with any $h^*$ satisfies the following condition: if it predicts (a non-$\perp$ label) on a point $x$, then so does CSS. Consequently, there is no such classifier with $\prp{\perp}{\cD}$ less than CSS.
\end{prop}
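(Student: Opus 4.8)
The plan is to prove optimality by an indistinguishability argument: I will show that any classifier $C$ attaining zero error with certainty across \emph{all} realizable instances must abstain on every point where CSS abstains. Since CSS abstains precisely when its version space is split on the test point, the crux is that such a split hands the adversary two equally plausible truths, forcing any committed prediction to be wrong in one of them.

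First I would fix an arbitrary training set $S = \{(x_1,y_1),\dots,(x_m,y_m)\}$ and recall that CSS, run on $S$, abstains on a test point $x$ exactly when the version space $\cV = \{h \in \cH : h(x_j) = y_j \ \forall j\}$ is non-unanimous on $x$, i.e.\ when there exist $h_1, h_2 \in \cV$ with $h_1(x) = +1$ and $h_2(x) = -1$. I want to show $C$ must also abstain on such an $x$. Both $h_1$ and $h_2$ are consistent with $S$ by membership in $\cV$, so each induces a legitimate realizable world, one with $h^* = h_1$ and one with $h^* = h_2$. Choosing any distribution $\cD$ supported on $\{x_1,\dots,x_m,x\}$ that places positive mass on $x$, the \emph{same} labeled training set $S$ is generated with positive probability in both worlds, since $h_1$ and $h_2$ agree with every $y_j$.

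Next I would exploit that $C$'s behavior can depend only on its observed input (the labeled set $S$ and the query $x$), together with any internal randomness, and never on $h^*$ directly; hence $C$ acts identically in the two worlds. If $C$ commits to a definite label $\ell \in \cY$ on $x$ with some probability $p > 0$, then in whichever of the two worlds has true label $h^*(x) \neq \ell$ (exactly one does, since $h_1(x)\neq h_2(x)$) the prediction is a mistake, and as $x$ carries positive mass the resulting error probability is at least $\prp{x}{\cD}\cdot p > 0$. This makes $C$'s error strictly positive on a genuinely realizable $(\cD, h^*)$, contradicting the hypothesis that $C$ has zero error with certainty on \emph{every} such instance. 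Therefore $C$ must output $\perp$ on $x$, which is exactly the set of points on which CSS abstains. The pointwise containment $\{x : C \text{ predicts}\} \subseteq \{x : \text{CSS predicts}\}$, valid for every $S$, then yields $\prp{\perp}{\cD}$ for $C$ at least that for CSS for every $\cD$; integrating over the random draw of $S$ preserves the inequality and gives the claimed optimality.

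The step I expect to be most delicate is the careful formalization of ``zero error with certainty on any $\cD$ with any $h^*$'': I must argue that this guarantee quantifies over all realizable instances, so that the adversarially constructed world $(\cD, h^* = h_2)$ is itself covered, and I must verify that the constructed $\cD$ genuinely produces $S$ with positive probability while assigning positive mass to $x$, so the forced mistake is not a measure-zero event that the ``with certainty'' clause could tolerate. Handling a possibly randomized $C$ requires only the observation that a positive probability of committing to $\ell$ already forces positive error; and the whole indistinguishability hinges on $C$ receiving identical labeled data in both worlds, which holds precisely because $h_1, h_2 \in \cV$.
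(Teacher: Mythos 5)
Your argument is correct and is essentially the paper's own proof: both construct an adversarial distribution supported on $S \cup \{x\}$, use the fact that some hypothesis in the version space disagrees with the committed prediction to define the true labeling, and derive a contradiction with zero error from the positive probability of regenerating $S$. Your treatment of randomized classifiers is slightly more explicit than the paper's, which only remarks that the randomized case ``follows along the same lines.''
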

\begin{proof}[Proof Sketch]
The argument is given for strategies which do not randomize the choice of whether to abstain on a given point. The proof is by contradiction, so assume there is a classifier $A$ that has zero error for any data distribution, and a point $x_0 \notin S$ such that $A(x_0) \neq \perp$ and $CSS(x_0) = \perp$. By the construction of CSS, there is a hypothesis $h_2 \in \cV$ such that $A (x_0) \neq h_2 (x_0)$.

Having fixed these facts, we now formulate a new classification problem (using the same $\cH$ and $\cX$) to prove the contradiction. Consider a new distribution $\cD_n$ over $\cX$ which puts positive probability on each of the $m$ points in $S$ and on $x_0$, and zero probability on all other points in $\cX$. Also define the new label-generating hypothesis $h^*_n \triangleq h_2$. Then with positive probability (say $\delta > 0$), an $m$-element training set drawn from $\cD_n$ will be equal to $S$. Now $A$ is deterministic and only depends on the training set, so $A$ will still predict on $x_0$ with probability $\delta$. But when it predicts on $x_0$, it will be erroneous because we established that $A (x_0) \neq h_2 (x_0) = h^*_n (x_0)$.

So $A$ does not have zero error with certainty, and we have our contradiction.

The argument for methods which randomize the abstain decision follows along the same lines.
\end{proof}

Of course, Prop.~\ref{cssoptimality} does not preclude the existence of a classifier which abstains less than CSS but only has zero error with high probability. Nevertheless, the result establishes the conceptual importance of CSS as the prototypical zero-error learning algorithm in statistical learning. It is particularly notable that the argument carries over to randomized algorithms as well - all of them are trumped in this setting by the deterministic CSS.

This author is not aware of this optimality argument being made for the variety of other extant CSS-like algorithms, though the logic is potentially extensible to these algorithms. It could be a particularly useful perspective to mention in the KWIK case for the enumeration algorithm, for which a potential combinatorial extension of this result is currently being examined by this author as a byproduct of the investigations of this manuscript.


\subsection{Zero-Error Learning in the Infinite-$\cH$ Case}

Interestingly, these favorable properties of CSS hold only for finite hypothesis classes; in fact, it is impossible in general for CSS to avoid abstaining everywhere when $\cH$ is infinite, even in the realizable case. The counterexample construction used to show this is of independent interest, and will now be outlined.

The construction is schematically shown in Figure 3.1. $\cH$ is the class of linear separators in $\RR^2$, and the data are uniformly distributed on two nonintersecting arcs, with one containing only positive examples and the other only negative examples as shown in the figure. It is clear that this is a realizable problem, because any separator lying between the arcs that does not intersect either one will have zero error. However, for any finite training set the CSS version space $\cV$ will only be unanimous for data in a convex polygon-like region inscribed in each arc; an example is the two shaded regions in Figure 3.1. Unfortunately, these regions only contain a finite number of data points which collectively have measure zero. So although CSS will predict correctly where it essays a prediction, it becomes a trivial algorithm which will almost surely abstain.

\begin{figure}[h]
\label{infhypexample}
\centering
\includegraphics[width=0.5\textwidth]{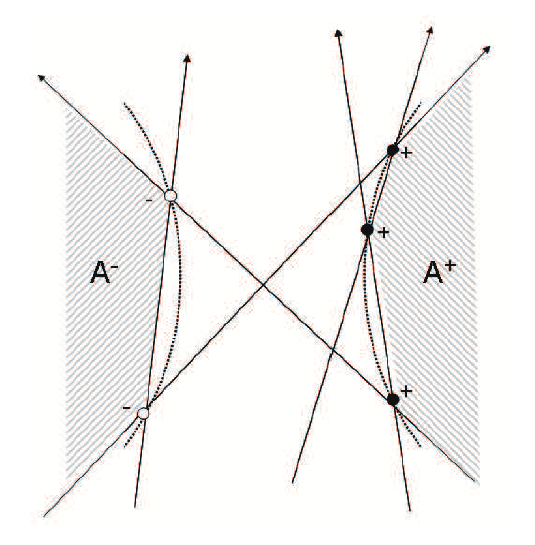}
\caption{Construction for which nontrivial CSS zero-error learning is impossible (from~\cite{EW10}).}
\end{figure}

This construction is a valuable one because similar arguments to the above make it a "hard" situation for many algorithms that depend on version space constructions and/or unanimity between hypotheses, which arise in other fields like active learning~\cite{DHM07}. Loosely speaking, it has the property that $\cH$ intuitively does not "fit" the data as represented; linear separators are not a natural choice for data whose intrinsic geometry is nonlinear. We will revisit this construction briefly later.


\subsection{Characterizing the Abstain-Error Tradeoff}
\label{tradeoffchar}
We now relax the requirement of zero error from the abstaining classifier and address how abstention rates can be reduced if a low error rate is allowed. The discussion here is the IID-setting analogue to that in Sec.~\ref{kwiksomemistakes}.

The tradeoff is illustrated in Figure 3.2, in which it is termed the \emph{risk-coverage} tradeoff. Coverage is simply the probability that an algorithm predicts rather than abstaining, and therefore is exactly the complement of the abstain probability $\prp{\perp}{\cD}$ we have used so far. Risk is a generalization of error probability to arbitrary loss functions. For our purposes hereafter, it can be taken as synonymous with error probability.

\begin{figure}[h]
\label{rctradeoff}
\centering
\includegraphics[width=0.85\textwidth]{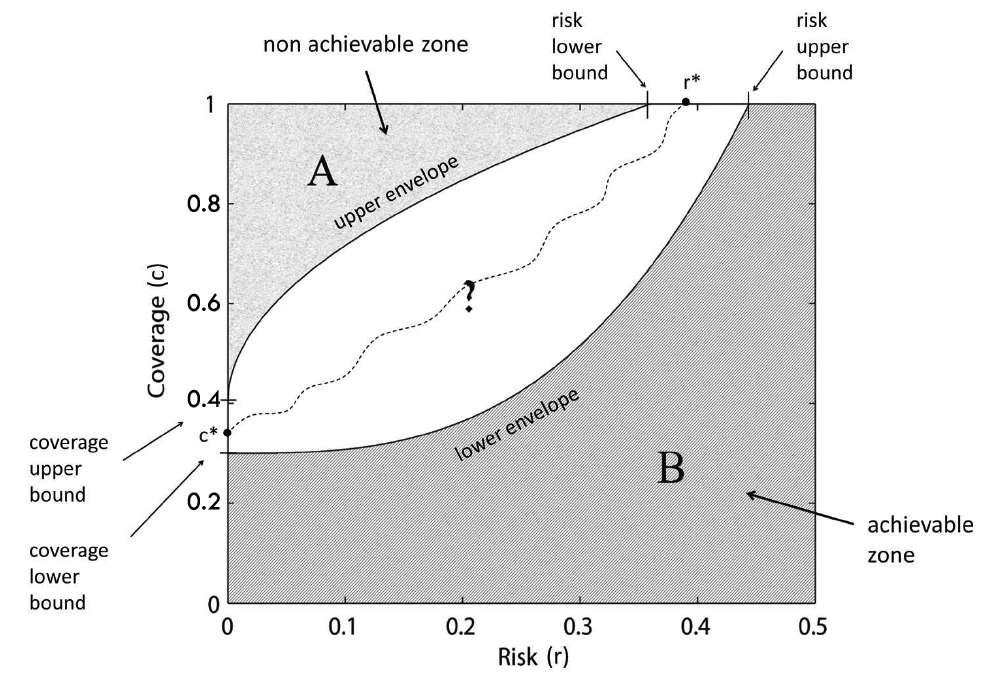}
\caption{The risk-coverage tradeoff in statistical learning (from~\cite{EW10}).}
\end{figure}

The tradeoff for any algorithm is schematically shown by the dotted curve in the figure: lower risk can only be achieved at the expense of coverage. The upper axis (with $r^*$) represents the case when the algorithm is not allowed to abstain; this is the scenario in standard binary classification, and $r^*$ represents the risk of the algorithm in this case. The left axis represents the zero-error case that CSS addresses, and $c^*$ is the coverage here. Most interestingly, upper and lower "envelopes" on this tradeoff graph can be proved for abstaining classifiers, setting limits on the risk-coverage tradeoff in the statistical learning setting.

We present the upper envelope only, as it is the more fundamental result as proved in~\cite{EW10}.

\begin{prop}[Thm. 37 from~\cite{EW10}]
Suppose $d$ is the VC dimension of $\cH$, and fix $\delta \in [0, \frac{1}{4}]$. Consider any classifier $A$ with the abstain capability, and suppose it has abstain probability $\rho$ and has error rate $R(A)$ on \emph{examples on which it predicts}. Then there is a data distribution $\cD$ such that with probability $\geq \delta$,
$$ R(A) \geq \min \left( \frac{1}{2} - \frac{1}{4 \rho} ,\;\;\frac{1}{2} - \frac{1}{2 \rho} + \frac{1}{16 \rho m} \left( d + \frac{16}{3} \ln (1-2\delta) \right) \right) $$
\end{prop}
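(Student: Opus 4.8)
The plan is to prove this lower bound by the standard recipe for minimax/VC lower bounds: construct an adversarial family of distributions on which no classifier---no matter how it allocates its abstentions---can avoid a large error rate on the examples it does predict, and then extract a single worst-case $\cD$ by the probabilistic method. The appearance of two terms inside the $\min$ signals two separate obstructions the adversary can impose: a pure \emph{coverage-budget} obstruction (the first term, which is independent of $m$ and $d$) and a \emph{finite-sample VC} obstruction (the second term, which carries the $d$, $m$, and $\delta$ dependence). I would prove a lower bound matching each term under its own construction and then observe that the adversary is free to select whichever is larger, yielding the $\min$.

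First, for the finite-sample term, I would invoke the VC dimension: since $\dim_{VC}(\cH) = d$, fix a shattered set $\{z_1, \dots, z_d\} \subseteq \cX$. I would place a distribution $\cD$ on these $d$ points (plus, if needed, one heavy ``anchor'' point to tune the coverage), and let the target $h^*$ assign the labels of the $z_i$ by independent fair coins---legitimate precisely because $\cH$ shatters them. The key observation is that for any shattered point \emph{not} represented in the training set $S$, the learner has no information about its label, so conditioned on predicting there it errs with probability exactly $\tfrac12$ in expectation over the random $h^*$. I would then lower-bound the number of such ``fresh'' points among the $d$ via a multiplicative Chernoff bound on how many receive at most some threshold of the $m$ samples; this is where the $\tfrac{16}{3}\ln(1-2\delta)$ correction and the $1/m$ scaling will emerge.

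Next comes the role of the abstention budget $\rho$. The optimal adversary forces the fresh (uncertain) mass to exceed the classifier's abstention budget, so that even after abstaining on its full $\rho$-worth of the most uncertain region, the classifier must still predict on fresh mass of size at least (fresh mass $- \rho$), incurring error $\tfrac12$ there. Normalizing the resulting error mass by the coverage $1-\rho$ and optimizing the adversary's split of probability between the easy/anchor region and the uncertain region is what produces the exact shape $\tfrac12 - \tfrac{1}{2\rho} + \tfrac{1}{16\rho m}(\cdots)$; running the same budget argument with a fully random, infinite-information uncertain region instead of the VC construction produces the coarser $\tfrac12 - \tfrac{1}{4\rho}$. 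Finally, the intermediate statement is only ``in expectation over the random $h^*$,'' so I would convert it to ``for some fixed $\cD$, with probability $\ge \delta$'' by the probabilistic method together with a Markov/tail argument on the training sample; the restriction $\delta \in [0,\tfrac14]$ and the $\ln(1-2\delta)$ factor are exactly the price of this conversion.

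The hardest part, I expect, will not be any single construction but the bookkeeping that fuses them: pinning down how the adversary should allocate mass so that the classifier's \emph{best-case} use of its $\rho$ budget still leaves the claimed residual error, while simultaneously tracking the Chernoff tail on fresh points and the expectation-to-high-probability conversion, all with the precise constants $\tfrac14$, $\tfrac12$, $\tfrac{1}{16\rho m}$, and $\tfrac{16}{3}$. Getting those constants to land exactly---rather than up to universal factors---is the delicate accounting that the full proof in~\cite{EW10} must carry out.
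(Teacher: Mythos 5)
First, a point of reference: the paper itself supplies no proof of this proposition --- it explicitly defers to the cited source, saying only that the argument rests on an ``interesting $\cD$ construction and the associated VC dimension argument.'' Your outline is consistent in kind with that one-line description: an adversarial distribution supported on a shattered set (plus an anchor point), labels assigned by random coins (legitimate by shattering), and a counting argument on shattered points not represented in the training sample is indeed the standard skeleton for lower bounds of this shape, and your identification of the abstention budget $\rho$ as the new ingredient that must be played against the ``fresh'' mass is the right instinct.

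That said, what you have written is a plan rather than a proof, and it has concrete gaps. (1) All of the quantitative content --- the constants $\tfrac14$, $\tfrac{1}{2\rho}$, $\tfrac{1}{16\rho m}$, $\tfrac{16}{3}$, and the exact optimization of the adversary's mass split against the classifier's best use of its $\rho$ budget --- is explicitly deferred to ``delicate accounting''; nothing in the sketch certifies that the accounting closes. (2) Your explanation of the $\min$ is backwards: if the adversary could always realize whichever of the two terms is larger, the conclusion would be a $\max$, not a $\min$. (Logically, establishing \emph{either} single term for some $\cD$ already implies the stated $\min$ bound, so the two-construction framing is redundant; a $\min$ of this form typically emerges from a case split inside one construction, governed by how $d + \tfrac{16}{3}\ln(1-2\delta)$ compares to $m$.) (3) The guarantee is ``with probability $\geq \delta$,'' i.e.\ you must show the \emph{bad} event over the draw of $S$ has probability at least $\delta$; this requires a reverse-Markov or second-moment step, not the usual concentration direction, and it is exactly where $\ln(1-2\delta)$ (which is negative) and the restriction $\delta \le \tfrac14$ enter. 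You gesture at ``a Markov/tail argument'' without specifying the direction, and getting it wrong sinks the proof. (4) Since $R(A)$ is conditional on predicting, the normalization by the coverage must be handled while the classifier's abstention region is allowed to depend on $S$ and on the (fixed, single) $\cD$ you must exhibit; the quantifier ordering between ``there exists $\cD$'' and ``for any classifier $A$'' is not pinned down in your sketch and is precisely where such arguments most often go astray.
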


No proof is given here for space reasons, but the reader is referred to~\cite{EW10} to examine the interesting $\cD$ construction and the associated VC dimension argument central to the proof.

The lower envelope proof involves interpolating between the two extremes of standard learning and CSS by considering a CSS-like classifier that predicts when $\cV$ is unanimous, but flips a coin of bias $\alpha \in [0,1]$ to decide whether to predict even when $\cV$ is not unanimous. (Clearly, the $\alpha = 0$ case corresponds to CSS and $\alpha=1$ to standard learning). The lower envelope is proved only for this $\alpha$-parametrized classifier in~\cite{EW10}.


\subsection{Relaxing CSS for the Non-Realizable Case}
Though we have characterized the fundamental abstention tradeoff, our presentation so far has lacked an algorithm which, like Algorithm~\ref{relenumalg} in the KWIK setting, exploits the tradeoff. We use this section and Section~\ref{classifierAgg} to explore two such methods in turn.

The first method we outline is directly inspired by CSS, and generalizes it to the non-realizable case. Here the target hypothesis $h^* = \displaystyle \argmin_{h \in \cH} \err{h}$; the difference from the realizable case is that $\err{h^*} \neq 0$ in general. The logic of CSS fails here, because $h^*$ is not necessarily in the version space $\cV$ as calculated on the training set $S$; indeed, it is possible that $\cV = \emptyset$. However, the IID assumption on the data means that $S$ is fairly representative of the data distribution $\cD$. Therefore, if we relax the definition of $\cV$ to include all low-training-error hypotheses (rather than zero-training-error hypotheses), we can trust that since $\iwerr{h^*} \approx \err{h^*}$, $h^*$ will be in this newly defined $\cV_{relaxed}$.

This is precisely what is done in~\cite{WE11}. For convenience, we dub the resulting method "Relaxed CSS." It is given in Algorithm~\ref{relcssalg}.

\begin{algorithm}
\caption{Relaxed CSS}
\label{relcssalg}
\begin{algorithmic}[1]
\State Given: training set $S = \{(x_1, y_1), \dots, (x_m, y_m)\}$, algorithm failure probability $\delta$, VC dim. of $\cH = d$
\State $\hat{h}^* \gets \argmin_{h \in \cH} \iwerr{h}$
\State $\cV_{relaxed} \gets \left\{h \in \cH : \iwerr{h} \leq \iwerr{\hat{h}^*} + 4 \sqrt{2 \frac{d \ln (2me/d) + \ln (8/\delta)}{m}} \right\}$
\For {any unlabeled point $x$}
\If{$h(x)$ is the same $\forall h \in \cV_{relaxed}$}
	\State Choose any $h \in \cV_{relaxed}$ and predict $\hat{y} = h(x)$
\Else
	\State Output $\perp$
\EndIf
\EndFor
\end{algorithmic}
\end{algorithm}

This algorithm is shown in~\cite{WE11} to satisfy good error and coverage bounds (with probability at least $1 - \delta$ over the choice of $S$) if the "excess error" $\ifn(h(x) \neq y) - \ifn(h^*(x) \neq y)$ is well-behaved over $\cD$ for all $h \in \cH$. More precisely, its error on the data on which it predicts is at most the error of $h^*$ on those data, with high probability. The full results are in~\cite{WE11}; we omit them here due to space constraints in favor of discussing an interesting implementation trick which provides further connections to other work.

Algorithm~\ref{relcssalg}, like CSS and both flavors of enumeration, faces the implementation difficulty of tracking a potentially exponentially large $\cV_{relaxed}$ so that it can detect unanimity or lack thereof. However, this task can be reduced to a suitably chosen empirical error minimization, as indicated in~\cite{WE11}:
\begin{prop}[Paraphrase of Lemma 6.1 in~\cite{WE11}]
For any $x \in \cX$, define $\hat{h}^*$ as in the algorithm and $\displaystyle \tilde{h}_x \triangleq \argmin_{h \in \cH} \{ \iwerr{h} \mid h(x) \neq \hat{h}^* (x) \}$. Then Algorithm~\ref{relcssalg} outputs $\perp$ iff $\iwerr{\tilde{h}_x} - \iwerr{\hat{h}^*} \leq 4 \sqrt{2 \frac{d \ln (2me/d) + \ln (8/\delta)}{m}}$.
\end{prop}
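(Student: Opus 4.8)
The plan is to establish this as a purely logical equivalence that reduces the algorithm's unanimity test over the (potentially enormous) set $\cV_{relaxed}$ to a single constrained empirical-risk minimization. Throughout, abbreviate the threshold by $\tau \triangleq 4 \sqrt{2 \frac{d \ln (2me/d) + \ln (8/\delta)}{m}}$, so that $\cV_{relaxed} = \{h \in \cH : \iwerr{h} \leq \iwerr{\hat{h}^*} + \tau\}$ and the stated inequality $\iwerr{\tilde{h}_x} - \iwerr{\hat{h}^*} \leq \tau$ is precisely the assertion $\tilde{h}_x \in \cV_{relaxed}$.

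The first step, and the conceptual crux, is to rewrite the abstention condition. Note that $\hat{h}^*$ always lies in $\cV_{relaxed}$, since $\iwerr{\hat{h}^*} \leq \iwerr{\hat{h}^*} + \tau$ trivially. Hence $\cV_{relaxed}$ fails to be unanimous on $x$ exactly when it contains some hypothesis disagreeing with $\hat{h}^*$ on $x$; equivalently, Algorithm~\ref{relcssalg} outputs $\perp$ iff there exists $h \in \cV_{relaxed}$ with $h(x) \neq \hat{h}^*(x)$. This is exactly the event that $\tilde{h}_x$ is built to detect, since $\tilde{h}_x$ minimizes empirical error over the feasible set $\{h \in \cH : h(x) \neq \hat{h}^*(x)\}$.

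The two implications then fall out directly. For the forward direction, suppose the algorithm abstains, so some $h \in \cV_{relaxed}$ satisfies $h(x) \neq \hat{h}^*(x)$. This $h$ is feasible for the minimization defining $\tilde{h}_x$, so $\iwerr{\tilde{h}_x} \leq \iwerr{h} \leq \iwerr{\hat{h}^*} + \tau$, yielding the inequality. For the converse, suppose $\iwerr{\tilde{h}_x} - \iwerr{\hat{h}^*} \leq \tau$. Then $\tilde{h}_x \in \cV_{relaxed}$, and since $\tilde{h}_x(x) \neq \hat{h}^*(x)$ by construction while $\hat{h}^* \in \cV_{relaxed}$, we exhibit a disagreement inside the version space and the algorithm abstains.

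Because the argument is at heart a bookkeeping equivalence, there is no deep obstacle; the only real care is needed at the boundary case where the feasible set $\{h \in \cH : h(x) \neq \hat{h}^*(x)\}$ is empty, i.e.\ every hypothesis in $\cH$ agrees with $\hat{h}^*$ on $x$. The hardest part is handling this cleanly: I would adopt the convention $\iwerr{\tilde{h}_x} = +\infty$ in that case, so that the inequality fails exactly when a unanimous $\cV_{relaxed}$ forces a prediction rather than an abstention, keeping both implications valid. A secondary subtlety for infinite $\cH$ is whether the $\argmin$ is attained; if not, I would replace $\tilde{h}_x$ by the infimum together with an arbitrarily-close-to-optimal feasible hypothesis, which changes neither implication since each uses only that feasible hypotheses achieve empirical error approaching $\iwerr{\tilde{h}_x}$.
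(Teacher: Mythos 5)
Your proof is correct and follows essentially the same route as the paper's: both hinge on the observation that the stated inequality is precisely the condition $\tilde{h}_x \in \cV_{relaxed}$, combined with the fact that $\tilde{h}_x$ disagrees with $\hat{h}^*$ on $x$ by construction while $\hat{h}^* \in \cV_{relaxed}$ always. You are in fact somewhat more thorough than the paper, which only argues the ``inequality implies abstain'' direction explicitly; your forward direction (using that any witness of non-unanimity must disagree with $\hat{h}^*$ and is feasible for the minimization defining $\tilde{h}_x$) and your treatment of the empty-feasible-set boundary case are correct and welcome additions.
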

\begin{proof}
Working backwards, the condition $\iwerr{\tilde{h}_x} - \iwerr{\hat{h}^*} \leq 4 \sqrt{2 \frac{d \ln (2me/d) + \ln (8/\delta)}{m}}$ is equivalent to $\tilde{h}_x \in \cV_{relaxed}$. By construction, $\tilde{h}_x (x) \neq \hat{h}^* (x)$. Both hypotheses are in $\cV_{relaxed}$, so $\cV_{relaxed}$ is not unanimous on $x$, and the algorithm therefore abstains.
\end{proof}
This simple argument reduces the generally intractable task of storing a version space to the task of empirical error minimization, which lacks the space complexity of the former and can be done efficiently in many cases (including for infinite $\cH$). For this reason, it appears in several version space-based algorithms (including CSS, for which a variant is presented in~\cite{EW10}). This is discussed in Section~\ref{activenonzeroerror}.


\subsection{Aggregating Classifiers to Exploit the Tradeoff}
\label{classifierAgg}

We now turn our attention to a conceptually elegant algorithm that exploits the tradeoff in the statistical learning setting by taking a \emph{weighted majority} (WM) vote of the classifiers in $\cH$. We have so far explored two algorithms that generalize the zero-error strategy in different settings in an attempt to address the tradeoff: the relaxed CSS of Algorithm~\ref{relcssalg} and the relaxed enumeration strategy of Algorithm~\ref{relenumalg}. Reexamining these will motivate the new WM voting method that we cover in this section.

Algorithm~\ref{relcssalg} generalizes CSS to the non-realizable case - essential to address the tradeoff - by relaxing the strict training set consistency requirement on the version space in CSS, instead considering the lowest-$\iwerr{h}$ hypotheses in $\cH$. Independently, Algorithm~\ref{relenumalg} generalizes enumeration to allow some mistakes in return for fewer abstentions - directly addressing the tradeoff - by relaxing the unanimity requirement on the version space for a prediction. Instead, it makes predict-or-abstain decisions based on the margin of a vote taken among hypotheses in the version space.

Combining these two separate relaxation ideas, a sensible algorithm could make predict-or-abstain decisions based on the margin of a vote over the lowest-$\iwerr{h}$ hypotheses in $\cH$, instead of assessing strict unanimity among only hypotheses with $\iwerr{h} = 0$. This precise intuition can be realized by weighting each hypothesis $h$ according to $\iwerr{h}$ and then deciding whether to predict based on the WM vote. The resulting algorithm, described in~\cite{FMS04}, is shown as Algorithm~\ref{aggalg} for finite $\cH$.

\begin{algorithm}
\caption{Weighted Majority Voting with Abstention}
\label{aggalg}
\begin{algorithmic}[1]
\State Given: training set $S = \{(x_1, y_1), \dots, (x_m, y_m)\}$, failure probability $\delta$, tuning parameter $\theta \in (0, 1/2)$
\State $\eta \gets \ln(8\abs{\cH}) m^{\frac{1}{2} - \theta}, \;\;\Delta \gets 2 \sqrt{\frac{\ln (\sqrt{2}/\delta)}{m}} + \frac{\eta}{8m}$
\For {any unlabeled point $x$}
\State $\hat{L}_\eta (x) \triangleq \displaystyle \frac{1}{\eta} \ln \left( \frac{\displaystyle \sum_{h: h(x) = +1} e^{-\eta \iwerr{h}}}{\displaystyle \sum_{h: h(x) = -1} e^{-\eta \iwerr{h}}} \right)$
\If{$\abs{\hat{L}_\eta (x)} > \Delta$}
	\State Predict $\hat{y} = \sgn(\hat{L}_\eta (x))$
\Else
	\State Output $\perp$
\EndIf
\EndFor
\end{algorithmic}
\end{algorithm}

Here each hypothesis $h$ are weighted as $e^{-\eta \iwerr{h}}$, so that the high-weight hypotheses have low empirical error - an implicit encoding of the idea behind Algorithm~\ref{relcssalg}, because only the high-weight hypotheses participate much in the vote. $\eta > 0$ is a learning rate that controls how much the weighting deviates from the default counting measure on $\cH$; a larger training set leads to a larger $\eta$, implying greater confidence that the up-weighted hypotheses are indeed the ones with lowest true error $\err{h}$. (To see this, consider the limit $m \to \infty$, in which case $\forall h \in \cH: \iwerr{h} \to \err{h}$, and the weighting considers only $\argmin_{h \in \cH} \err{h}$ while ignoring all other hypotheses). It can also be seen that the $\hat{L}_\eta (x)$ thresholding achieves the same effect as Algorithm~\ref{relenumalg}: predict if the vote is enough of a landslide, and otherwise abstain.

As with the work we have previously examined, the theoretical guarantees proved for this algorithm include bounds on the abstain and error probabilities:

\begin{prop}[Corollary 1 in~\cite{FMS04}]
The error probability of Algorithm~\ref{aggalg} is at most $2 \err{h^*} + \ctO{m^{\theta - \frac{1}{2}}} + \delta$. Also, for $m \in \Omega \left( (\sqrt{\ln(1/\delta)} \ln (\abs{\cH}))^{1/\theta} \right)$, the abstain probability is at most $5 \err{h^*} + \cO{\frac{\sqrt{\ln(1/\delta)} + \ln (\abs{\cH})}{m^{\frac{1}{2} - \theta}}}$.
\end{prop}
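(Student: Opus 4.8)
The plan is to compare the empirical predictor $\hat{L}_\eta$ with an idealized counterpart built from the true errors $\err{h}$, and to analyze the latter through a Gibbs (exponential-weights) distribution. Concretely, define
$$ L_\eta(x) \triangleq \frac{1}{\eta} \ln\left( \frac{\sum_{h: h(x)=+1} e^{-\eta \err{h}}}{\sum_{h: h(x)=-1} e^{-\eta \err{h}}} \right) $$
together with the distribution $Q(h) \propto e^{-\eta \err{h}}$ over $\cH$. The argument then splits into two parts: (i) a concentration step showing that $\hat{L}_\eta$ tracks $L_\eta$ closely enough that the threshold $\Delta$ absorbs the discrepancy, so the actual algorithm behaves like the idealized $L_\eta$-predictor; and (ii) a generalization step bounding the error and abstention of the idealized predictor in terms of the Gibbs error $\EV_{h\sim Q}[\err{h}]$, which is in turn controlled by $\err{h^*}$.

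For part (i) I would avoid a crude union bound over $\abs{\cH}$ and instead fix a test point $x$, viewing $\hat{L}_\eta(x)$ as a function of the $m$ i.i.d. training examples. Changing one example perturbs every $\iwerr{h}$ by at most $1/m$, hence (via the log-sum-exp stability estimate $\abs{\hat{L}_\eta - L_\eta} \le 2\max_h \abs{\iwerr{h} - \err{h}}$) perturbs $\hat{L}_\eta(x)$ by $\cO{1/m}$. McDiarmid's inequality then yields a deviation of order $\sqrt{\ln(1/\delta)/m}$, matching the first term of $\Delta$, while the Jensen gap between $\EV_S[\hat{L}_\eta(x)]$ and $L_\eta(x)$ accounts for the $\frac{\eta}{8m}$ term. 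Because a fresh test point is independent of $S$, a Fubini/averaging argument promotes this per-point, expectation-over-$S$ control into a statement holding for the random test point with probability at least $1-\delta$ over $S$; this is the source of the additive $\delta$ in the error bound.

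For the error bound in part (ii) I would use a factor-two majority-vote argument. On the good concentration event, whenever the algorithm predicts and errs at $(x,y)$ the weight on the wrong side exceeds that on the correct side, so $\Pr_{h\sim Q}[h(x)\neq y] > \tfrac12$. Since $g(x,y) \triangleq \Pr_{h\sim Q}[h(x)\neq y]$ is a $[0,1]$-valued random variable with $\EV_\cD[g] = \EV_{h\sim Q}[\err{h}]$, Markov's inequality gives $\Pr_\cD[\text{predict and err}] \le \Pr_\cD[g > \tfrac12] \le 2\,\EV_{h\sim Q}[\err{h}]$. To finish, I would bound the Gibbs error by splitting $\cH$ into hypotheses with $\err{h}$ near the minimum and the rest, bounding the $Q$-mass of the latter by $\abs{\cH}\, e^{-\eta\tau}$, to get $\EV_{h\sim Q}[\err{h}] \le \err{h^*} + \cO{\ln\abs{\cH}/\eta}$; substituting $\eta = \ln(8\abs{\cH})\, m^{1/2-\theta}$ converts the excess into $\ctO{m^{\theta-1/2}}$ and produces the claimed $2\err{h^*} + \ctO{m^{\theta-1/2}} + \delta$.

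The abstention bound would follow a dual minority-mass argument: the algorithm abstains exactly when neither side of the weighted vote dominates, i.e. when the Gibbs margin $\abs{\Pr_{h\sim Q}[h(x)=+1] - \Pr_{h\sim Q}[h(x)=-1]}$ is at most $\beta = \tanh(\eta\Delta/2)$, whence the minority side — and so the set of hypotheses erring at $(x,y)$ — carries $Q$-mass at least $\frac{1-\beta}{2}$. This yields $\prp{\perp}{\cD} \le \frac{2}{1-\beta}\,\EV_{h\sim Q}[\err{h}]$, and combining with the Gibbs bound above reproduces the $5\err{h^*} + \cO{(\sqrt{\ln(1/\delta)}+\ln\abs{\cH})/m^{1/2-\theta}}$ form. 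I expect the main obstacle to be exactly the bookkeeping that turns these clean ratios into the stated constants and lower-order terms: pinning down the Jensen-gap bias so that the precise threshold $\Delta$ (both its $\sqrt{\ln(1/\delta)/m}$ and $\eta/8m$ pieces) is the correct correction, and controlling $\beta$ in the abstention step — which is why the coverage claim needs the regime $m \in \Omega\big((\sqrt{\ln(1/\delta)}\,\ln\abs{\cH})^{1/\theta}\big)$ to keep $\eta\Delta$ small and $\frac{2}{1-\beta}$ bounded. The factor-two error argument is comparatively clean; the delicate nonlinear concentration of the log-sum-exp functional and the constant-chasing in the abstention bound are where the real work lies.
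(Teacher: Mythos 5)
Your proposal is correct and follows essentially the same route as the paper's (sketched) argument: concentration of $\hat{L}_\eta(x)$ about its population analogue $L_\eta(x)$, with the threshold $\Delta$ absorbing both the McDiarmid-type fluctuation and the Hoeffding/Jensen bias term, for the abstention side; and a ``vote dominated by good hypotheses, with $h^*$ as the witness'' argument for the error side, which you formalize via the Gibbs distribution $Q(h)\propto e^{-\eta\err{h}}$ and the factor-two Markov step. The only thing you add beyond the paper's sketch is the explicit bookkeeping (the $\tanh(\eta\Delta/2)$ margin and the constant-chasing for the factor $5$), which is exactly where you correctly locate the remaining work.
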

\begin{proof}[Proof Sketch]
The proof idea for the abstain bound is fairly standard and relies on concentration of $\hat{L}_\eta (x)$ about $L_\eta (x)$, its non-empirical analogue in which $\iwerr{h}$ is replaced by $\err{h}$ in the calculations. The abstain threshold $\Delta$ therefore can be seen to quantify the finite-sample uncertainty in estimating $L_\eta (x)$ by $\hat{L}_\eta (x)$.

The error bound, however, is proved differently. It essentially argues that the vote is dominated by "good" hypotheses ($\err{h}$ close to $\err{h^*}$), and uses the tautology that there must be at least one such good hypothesis - $h^*$ itself - to argue that the vote too must perform well.

These arguments are quite general compared to other statistical learning arguments we have outlined, and hold equally well for the realizable and agnostic cases. They also hold for infinite $\cH$ with an additional explicit assumption that there is a nonzero mass of "good" hypotheses.
\end{proof}

These bounds are seen to be quite loose (in their dependence on $m$) compared to those in Sec.~\ref{tradeoffchar}. However, they do characterize the tradeoff, and exhibit a further interesting property: the error bound is completely independent of $\cH$, as the dependence on hypothesis class complexity is corralled in the abstain probability. By comparison, the standard "Occam's Razor" IID-learning bound on a vanilla binary classifier $\err{h^*} + \cO{\sqrt{\frac{\ln(\abs{\cH}/\delta)}{m}}}$ does depend, albeit weakly, on $\abs{\cH}$~\cite{BEHW87}. This is to our knowledge unique in the literature as another potential benefit of allowing the learner to abstain.

Though there is no explicit version space here, the predictions of all $h \in \cH$ are nevertheless necessary, so the same intractability issues that plague CSS and enumeration recur here. This can be mitigated, for instance by carefully constructing a manageably finite "$\epsilon$-cover" approximation of $\cH$~\cite{BB10}. However, it remains open how to use other tactics - such as the sampling and empirical error minimization ones previously mentioned - to address the problem.

The successive relaxations of unanimous-version-space arguments that we used to motivate this algorithm have the desirable consequence of making it much more robust. We illustrate this by noting that this WM algorithm is not foiled by the ill-matched geometry in the earlier construction of Figure 3.1. There are plenty of zero-error hypotheses in $\cH$ in that scenario, and they dominate the vote, potentially leading to zero error for high enough $m$.

Votes like this are well known to be desirable in another respect: they can perform far better than $h^*$ when the constituent hypotheses make diverse predictions, in effect compensating for each other's mistakes. This has further implications which are discussed in Sec.~\ref{futurework}.

Finally, the WM algorithm used here has several other suggestive interpretations (for instance, if the total $L^1$ normalized weight of hypotheses predicting $+1$ on $x$ is seen as an estimate of $\prp{y = +1 \mid x}{\cD}$, then $\hat{L}_\eta (x)$ is basically applying a logit transform). Other connections to previous work on similar weighted majorities in a variety of settings are too numerous to discuss here but highlight the richness of this approach~\cite{FS97, AHK12, LW94}.


\section{Connections to Active Learning}
\label{cssactive}
Active learning~\cite{S12} is a well-motivated supervised learning paradigm in which the learner attempts to minimize the number of labels it needs by requesting labels at its own discretion. As mentioned in Section~\ref{enumsec}, there is a natural correspondence between a $\perp$ output and a label request, linking abstain-based results with active learning. We have previously interspersed brief references to these connections where appropriate, but will now devote this section to elaborating on how the work outlined in this manuscript relates to active learning.

A major goal of active learning in the realizable case is to learn a hypothesis of error $\epsilon$ using only $\cO{\polylog (1/\epsilon)}$ labels. This represents exponentially fewer labels than required by standard supervised learning, for which $\Omega(1/\epsilon)$ labels are needed~\cite{D11}. This is an ambitious but sometimes achievable goal, as demonstrated by the canonical example of binary search as an active algorithm for learning a threshold on a line segment. It has also been proven for certain restricted special cases, like learning a linear separator when the data are uniformly distributed on a sphere~\cite{FSST97, DKM09}, but more general cases under which it is possible are much sought after in the community.


\subsection{Zero-Error Learning}

The online mistake-free enumeration algorithm (Alg.~\ref{enumalg}), developed for the KWIK setting, has been independently known as the CAL algorithm after the authors who first introduced it in~\cite{CAL94}, and is the forerunner of a significant strand of theoretically motivated active learning work~\cite{BBL06, BDL09, BHLZ10}. This is somewhat surprising because it is extremely conservative in settling for $\perp$, whereas active learning aims to \emph{minimize} the number of label requests. Indeed, one of the largest open problems in active learning is to design and analyze algorithms which are more "aggressive" in their label querying, rather than the more "mellow" CAL~\cite{D11}.

There is scant mention of this connection to active learning in the existing KWIK literature. However, we observed in Section~\ref{SLsec} that the CSS algorithm is exactly analogous to enumeration in the IID-data setting of statistical learning. Recent work~\cite{EW12} has shown a deep link between CSS and CAL (stated in terms of the VC dimension of $\cH$, a widely-used measure of the hypothesis class complexity, although other such measures would also suffice):
\begin{prop}[Paraphrase of Thm.9 from~\cite{EW12}]
Suppose $\cH$ is a hypothesis class with VC dimension $d < \infty$. Then if CSS can learn it with $\prp{\perp}{\cD} \in \cO{\frac{\polylog (m)}{m}}$, CAL will learn it using only $\cO{\polylog (1/\epsilon)}$ labels.
\end{prop}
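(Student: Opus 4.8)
The plan is to exploit the fact that CSS (Algorithm~\ref{cssalg}) and CAL (Algorithm~\ref{enumalg}) maintain \emph{identical} version spaces, differing only in whether labels are gathered in batch or online. The key structural observation is that a labeled point $(x,y)$ alters the version space only when $\cV$ is non-unanimous at $x$: if every $h \in \cV$ agrees at $x$, then (since $h^* \in \cV$ by realizability) they all predict the true $y$, and no hypothesis is eliminated. Hence CAL, which requests a label precisely on the non-unanimous points, produces after processing $t$ unlabeled examples exactly the version space $\cV_t$ that CSS would obtain from the fully-labeled set of those same $t$ points. This equivalence is what lets me transfer CSS's abstain guarantee into statements about CAL's error and label complexity.

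First I would define the disagreement region $\mathrm{DIS}(\cV_t) \triangleq \{x : \exists h_1, h_2 \in \cV_t,\; h_1(x) \neq h_2(x)\}$ and set $\Phi(t) \triangleq \prp{x \in \mathrm{DIS}(\cV_t)}{\cD}$. Since CSS abstains at $x$ iff $\cV_t$ is non-unanimous there, CSS's abstain probability equals $\Phi(t)$, so the hypothesis reads $\Phi(t) \in \cO{\polylog(t)/t}$ with high probability. Next I would bound CAL's error: for any $h \in \cV_t$, outside $\mathrm{DIS}(\cV_t)$ all of $\cV_t$ (including $h^*$) agree, so $h$ can err only inside the disagreement region and $\err{h} \leq \Phi(t)$. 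Thus to reach error $\epsilon$ it suffices to stream $t$ unlabeled points with $\Phi(t) \leq \epsilon$; inverting $\polylog(t)/t \leq \epsilon$ gives $t = \ctO{1/\epsilon}$ unlabeled examples.

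The crux is the label-complexity count. CAL queries on round $i$ exactly when $x_i \in \mathrm{DIS}(\cV_{i-1})$, so the number of queries through round $t$ is $N_t = \sum_{i=1}^{t} \ifn(x_i \in \mathrm{DIS}(\cV_{i-1}))$. Conditioning on the past, $\EV[\ifn(x_i \in \mathrm{DIS}(\cV_{i-1}))] = \EV[\Phi(i-1)] \in \cO{\polylog(i)/i}$, so $\EV[N_t] = \sum_{i=1}^{t} \cO{\polylog(i)/i}$. This harmonic-type sum telescopes (by comparison with $\int \polylog(x)/x\,dx$) to $\cO{\polylog(t)}$, and with $t = \ctO{1/\epsilon}$ this is $\cO{\polylog(1/\epsilon)}$, the desired label bound.

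The main obstacle is upgrading these expectation-level statements to the high-probability guarantees the theorem claims. Two difficulties appear. The indicators $\ifn(x_i \in \mathrm{DIS}(\cV_{i-1}))$ are \emph{not} independent, since $\cV_{i-1}$ depends on the entire history, so concentrating $N_t$ about its mean requires a martingale inequality (Azuma--Freedman on the filtration generated by the stream) rather than a Chernoff bound. Moreover, the premise $\Phi(t) \in \cO{\polylog(t)/t}$ is itself a high-probability event over the sample and must be invoked \emph{simultaneously} across the $\ctO{1/\epsilon}$ rounds of interest. This is exactly where the finiteness of the VC dimension $d$ enters: it bounds the complexity of the disagreement regions that can arise from data-consistent version spaces, yielding a uniform-convergence guarantee (together with a union bound over the rounds) that pins the empirical abstain rate to $\Phi(t)$ throughout, so the telescoping estimate above holds with high probability.
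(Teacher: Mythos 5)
The paper itself gives no proof of this proposition --- it is stated as a paraphrase of Thm.~9 of~\cite{EW12} and the argument is deferred entirely to that reference. Your proposal reconstructs essentially the argument used there: the identification of CAL's version space after $t$ stream points with CSS's version space on those same $t$ points, the bound $\err{h} \leq \Phi(t)$ for every $h \in \cV_t$, and the reduction of the label count to the harmonic-type sum $\sum_{i=1}^{t} \EV[\Phi(i-1)] \in \cO{\polylog(t)}$ with $t = \ctO{1/\epsilon}$, followed by a martingale concentration step to get a high-probability statement. The one place you are slightly imprecise is the role of $d < \infty$: it is needed chiefly so that the CSS coverage premise can be invoked uniformly over the (random) training sets arising at each round, via a union bound over $t$ with adjusted confidence, rather than to control the ``complexity of the disagreement regions'' per se; but this does not affect the validity of the outline.
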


This result motivates the further study of CSS to potentially yield highly desirable active learning results. A good beginning in this vein is also proved in~\cite{EW12}, using computational geometry tools from~\cite{EW10} to show that CAL requests $\cO{\polylog (1/\epsilon)}$ labels when $\cH$ is the class of linear separators in $\RR^d$ and $\cD$ is a mixture of a fixed finite number of Gaussians. There remain concerns about how such approaches scale with the dimension $d$ (an argument is made in~\cite{EW12} that CAL must request exponentially many labels in $d$), indicating that CAL has its limitations. More aggressive query strategies may be needed.


\subsection{Agnostic Learning}
\label{activenonzeroerror}

The idea of Algorithm~\ref{relcssalg} - to relax the strict training set consistency requirement of the version space in CSS and instead consider all hypothesis $h$ with low enough $\iwerr{h}$ - led to a major advance in active learning when it was successfully applied to CAL in~\cite{BBL06}.

The implementation trick outlined earlier for Algorithm~\ref{relcssalg} - minimizing the empirical risk of a hypothetical dataset as a means of detecting unanimity among hypotheses in a relaxed version space - has also appeared several times in the abstention-related literature and in active learning~\cite{EW10, SZB10, BHLZ10, DHM07, BBL06}. The "excess error" is assumed to be well-behaved in the analysis of the relaxed CSS algorithm; such assumptions on the same quantity have been partially studied in active learning as the Tsybakov noise conditions~\cite{T04}, under which efficient active learning algorithms with provably favorable properties have been derived. In fact, the excess error turns out to be fundamental to this line of active learning work, as made explicit recently in~\cite{ABE12}.


\section{Future Work and Conclusion}
\label{futurework}

The study of abstaining classifiers is far from complete, and there are several open areas to be addressed. Prominent among them is the study of abstaining classifiers without a version space. The version space idea has been central to much work so far because of its thoroughly-explored utility in the realizable case when zero error is allowed. This is somewhat of a toy case compared to most real applications, however, and necessitates constant efforts to make the resulting algorithms more tractable. It is also an unnecessarily brittle notion, as the construction of Figure 3.1 suggests. 

Another application-motivated open problem is to refine the insights of the work discussed here to be sensitive to the relative costs of abstaining and erring on a prediction. These costs generally drive the tradeoff in applications and could therefore be assumed to be known beforehand. An ambitious goal in this spirit is to devise algorithms which can precisely quantify their own confidence in a prediction, in a probabilistic manner.

Abstaining classifiers are also intimately connected to active learning, as discussed in Section~\ref{cssactive}. A range of ideas, from implementation tricks to algorithmic intuitions to the specific algorithms themselves, have been shown to recur in both fields. At present, there is very little work like~\cite{EW12} that attempts to explicitly unify the fields further. Both fields have a long history and have similar goals, yet there are several ideas, like relaxed enumeration in the KWIK setting and the disagreement coefficient in active learning~\cite{BBL06}, that are currently explored asymmetrically in one field. Progress on cross-pollinating these ideas would be of mutual enrichment to both areas.

Another area of potentially fruitful research (on which we have made progress) concerns improvements on Algorithm~\ref{aggalg}, the weighted aggregation algorithm. We observed that this algorithm can be seen as synthesizing intuitions from relaxed enumeration and relaxed CSS, and so we believe the classifier aggregation approach for abstention is worth deeper examination. In particular, majority votes are remarkably robust, and are known to perform better than any of their constituents when the constituent predictions are diverse~\cite{KW03}. Hypothesis weighting strategies that maximize this diversity could provably help performance~\cite{WLR06}, both for abstaining classifiers and in active learning.

We hope this survey provides some insight into recent approaches to designing classifiers that can abstain, which should see increased deployment in various applications over the coming years.


\subsubsection*{Acknowledgments}

I would like to thank my committee - David Kriegman, Sanjoy Dasgupta, and Kamalika Chaudhuri - for their time and feedback at various stages of the research exam process. Thanks are also due to Yoav Freund for first introducing me to this area and for various helpful suggestions.

\newpage
\bibliography{WriteupAbstain}{}
\bibliographystyle{alpha}

%
%
%
%

\end{document}